\pgfplotsset{compat=newest} 
\pgfplotsset{plot coordinates/math parser=false}
\newcommand{\E}{\mathsf{E}}
\newcommand{\Tr}{\operatorname{trace}}
\newcommand{\T}{\top}
\newcommand{\mc}{\mathcal}
\newcommand{\defeq}{\triangleq}
\newcommand{\1}{\mathds{1}}
\newcommand{\btheta}{\bm{\theta}}
\newcommand{\bs}{\bm{s}}
\newcommand{\ba}{\bm{a}}
\newcommand{\Ps}{\mc{P}}
\newcommand{\N}{\mc{N}}
\renewcommand{\Re}{\mathbb{R}}
\newcommand{\St}{\mathbb{S}}
\newcommand{\Ac}{\mathbb{A}}
\newcommand{\V}{\mathbb{V}}
\newcommand{\M}{\mathbb{M}}
\newtheorem{lemma}{Lemma}
\newtheorem{corollary}{Corollary}
\begin{document}
%
\title{Diff-DAC: Distributed Actor-Critic for Average Multitask \\
Deep Reinforcement Learning}
\author{
	Sergio Valcarcel Macua$^\star$
	\and
	Aleksi Tukiainen$^\star$
\AND
	Daniel Garc\'ia-Ocaña Hern\'andez$^\dagger$
	\and
	David Baldazo$^\dagger$
\AND
	Enrique Munoz de Cote$^\star$
	\and
	Santiago Zazo$^\dagger$\\
	$^\star$PROWLER.io\\
	$^\dagger$Information Processing and Telecommunications Center - 
	Universidad Polit\'ecnica de Madrid
}
\maketitle

\begin{abstract}
We propose a fully distributed actor-critic algorithm approximated by deep neural networks, named \textit{Diff-DAC}, with application to single-task and to average multitask reinforcement learning (MRL).
Tasks share state-action sets and reward function, but have different state transition distribution.
Each agent has access to data from its local task only, but it aims to learn a policy that performs well on average for the whole set of tasks. 
During the learning process, agents communicate their value-policy parameters to their neighbors, diffusing the information across the network, so that they converge to a common policy, with no need for a central node.
The method is scalable, since the computational and communication costs per agent grow with its number of neighbors.
We derive Diff-DAC's from duality theory and provide novel insights into the standard actor-critic framework, showing that it is actually an instance of the dual ascent method that approximates the solution of a linear program. 
Experiments suggest that Diff-DAC can outperform the single previous distributed MRL approach (i.e., Dist-MTLPS) and even the centralized architecture.\footnote{Presented at Adaptive Learning Agents workshop (ALA2018), July 14th, 2018, Stockholm, Sweden.}
\end{abstract}

%
\section{Introduction}
%
%
%

\noindent Within a decade, billions of interconnected devices will be processing and exchanging data throughout the global economy \cite{gartner2015}. 
Centralised reinforcement learning (RL) architectures where all devices interact with a central station might be unfeasible. 
Fully distributed RL algorithms, where agents communicate only with neighbors and without central control, offer a solution to this problem, 
since the communication cost per agent scales linearly with its number of neighbors. 
In this distributed approach, each agent learns by interacting with its own environment, but is able to cooperate and benefit from the learning process of the whole network.
When all agents' environments are equal, they learn to perform a single task; 
when environments are different but related, 
they learn to generalize across all tasks \cite{taylor2009transfer}.
The latter is known as the multitask reinforcement learning (MRL) problem.
We propose an algorithm named \textit{Diffusion-based Distributed Actor-Critic} (Diff-DAC) for both single-task and MRL problems.

Most previous MRL approaches assumed access to data from all tasks 
\cite{bou-ammar2014online,parisotto2015actor,teh2017distral}.
But if the number of tasks is large and their data are geographically distributed,
the communication cost of transmitting the data to a central station might be prohibitive.

The idea of making scalable MRL with distributed optimization was first proposed by \citeauthor{el2017scalable} \shortcite{el2017scalable} with the Dist-MTLPS method, 
which extended a distributed implementation of ADMM due to \citeauthor{wei2012distributed} \shortcite{wei2012distributed}.
Our work improves over Dist-MTLPS in a number of ways: 
	\textit{i)} Dist-MTLPS relies on linear function approximation, which requires finding salient features,
	and it only considers policies in the natural exponential family of distributions.
	Diff-DAC, on the other hand, uses deep learning architectures to avoid costly feature engineering, and is able to learn more expressive policies.
	\textit{ii)} The distributed ADMM updates of the agents are done in sequential order,
	requiring finding a cyclic path that visits all agents,
	which is generally an NP-hard problem \cite{karp1972reducibility}.
	Diff-DAC uses a diffusion strategy \cite{sayed2014adaptation}, 
	where each agent interacts with its neighbors, with no ordering, 
	and possibly asynchronously \cite{zhao2015asynchronousI}.
	\textit{iii)} Sequential strategies are sensitive to agent or link failures, 
	since they stop the information flow;
	while diffusion strategies are robust since the agents can still operate even if parts of the network become isolated.

As far as we know, all other previous works on distributed RL only considered tabular or linear function approximations (e.g., \cite{Kar2012,valcarcel2013distributed,tutunov2016exact}),
and do not apply immediately to expressive nonlinear approximations.
In particular, reference \cite{Kar2012} added a consensus rule to tabular Q-learning; a principled nonlinear extension raises questions like whether we should we add consensus to the target network updates, and would be an alternative contribution to our actor-critic approach. 
The Dist-GTD method due to \cite{valcarcel2013distributed} is for policy evaluation with linear approximation, and tts extension to control and nonlinear approximations isn't trivial even for the single-agent GTD.
Finally, reference \cite{tutunov2016exact} proposed a second order method, implying the inversion of the Hessian at every agent,
which might be problematic for neural networks with hundreds of neurons.
Other related works suffer from similar drawbacks.

\textbf{Contributions.} 
(1) We propose a fully distributed actor-critic deep reinforcement learning algorithm named Diff-DAC for the single and average multitask problem that scales gracefully to large number of tasks.
(2) We re-derive the actor-critic framework from duality theory and show that it is an instance of \textit{dual-ascent} to approximate the saddle-point of the Lagrangian of a linear program (LP). 
This derivation formalizes previous intuitions \cite{pfau2016connecting} and provides novel insights,
like a policy gradient that includes the advantage function explicitly, rather than as a variance reduction technique.
(3) Experimental results suggest that Diff-DAC outperforms Dist-MTLPS,
and that it is more stable and achieves better local optima than the centralized approach, without replay memory or target networks.

%
\section{Problem Formulation}
\label{sec:problem-formulation}
%

In this section, we formalize tasks as Markov decision processes (MDPs),
define a family of tasks and introduce the multitask optimization problem. 

Consider a parametric family of MDPs defined over finite\footnote{The proposed Diff-DAC algorithm uses function approximation so that is able to work in continuous state-action sets as well.} 
state-action sets, $\St$ and $\Ac$.
Each MDP of the family has different state transition distribution, $\Ps_{\theta} (s'|s,a)$,
that depend on some parameter $\theta \in \Theta$,
where $\Theta$ is a measurable compact set.
The task family is given as a probability distribution over the parameter set, $f$, 
so that the parameter is a random variable\footnote{We use boldface font to denote random variables and regular font to denote instances or deterministic variables.}:
$\btheta = \theta \sim f$.
All MDPs in the faminly share the
reward function, $r(s,a)$, and the distribution over the initial state, $\mu(s)$, $\forall s,s' \in \St$, $a \in \Ac$,

Let $\pi : \St \times \Ac \mapsto [0,1]$ be a stationary policy,
such that $\pi(a|s)$ denotes the probability of taking action $a$ at state $s$.
Let $v: \Pi \times \St \mapsto \Re$ denote the value function,
such that $v_{\theta}^{\pi}(s)$ is the value at state $s $ when following policy $\pi$:
\begin{IEEEeqnarray}{rCl}
	v^{\pi}_{\theta}(s) 
\defeq  
	\E_{\pi, \mc{P}_{\theta}}
		\left[
			\sum_{t=0}^{\infty}
				\gamma^{t} 
				r ( \bm{s}_t, \bm{a}_t )
		\: 
		\big| 
		\:
			\bm{s}_0 = s
		\right]
,
\label{eq:task-value-function}
\end{IEEEeqnarray}
where $\E_{\pi, \mc{P}_{\theta}} \left[  \cdot \right]$ is the expected value when $\ba_t \sim \pi(\cdot|s_t)$ 
and $\bs_{t+1} \sim \mc{P}_{\theta} (\cdot | s_t, a_t)$;
and
$
	0 < \gamma <	1
$
is the discount factor.
Introduce the vector of values:
$
	v^{\pi}_{\theta} 
\defeq  
	\left(
		v^{\pi}_{\theta}(s)
	\right)_{s \in \St}
\in 
	\Re^{|\St|}
$.

Suppose we have observed $N$ tasks that correspond to parameters $\left \{ \theta_k \right\}_{k=1}^N$.
%
%
%
Our goal is to learn a stationary policy that maximizes the \textit{global} average value:
\begin{IEEEeqnarray}{rCl}
\underset{  \pi }{\rm maximize} 	
\quad	
		\mu^\T 
		\left(
			\sum_{k=1}^N 
				v_{\theta_k}^{\pi}
		\right)
,
\label{eq:empirical-risk}
\end{IEEEeqnarray}

We assume  bounded rewards:
\begin{IEEEeqnarray}{rCl}
	| r(s,a) |
&
\le 
&
	R_{\max} < \infty
,\;\:
	\forall (s,a) \in \St \times \Ac 
,\quad
\label{eq:bounded-reward}
\end{IEEEeqnarray}
for some scalar $R_{\max}$.
Under this assumption we can easily ensure existence of solution to \eqref{eq:empirical-risk}.
%
%

When all task parameters $\left \{ \theta_k \right\}_{k=1}^N$ are equal, \eqref{eq:empirical-risk} is the single-task RL problem;
when they differ, \eqref{eq:empirical-risk} becomes an MRL problem where we aim to learn a single policy that performs optimally in average for the whole set of tasks.

%
\section{Networked Multiagent Setting}
\label{sec:multiagent-approach}
%

In this section we introduce the networked multiagent setting that learns in a fully distributed manner.

\begin{figure}
\centering \includegraphics[width=0.95\columnwidth]{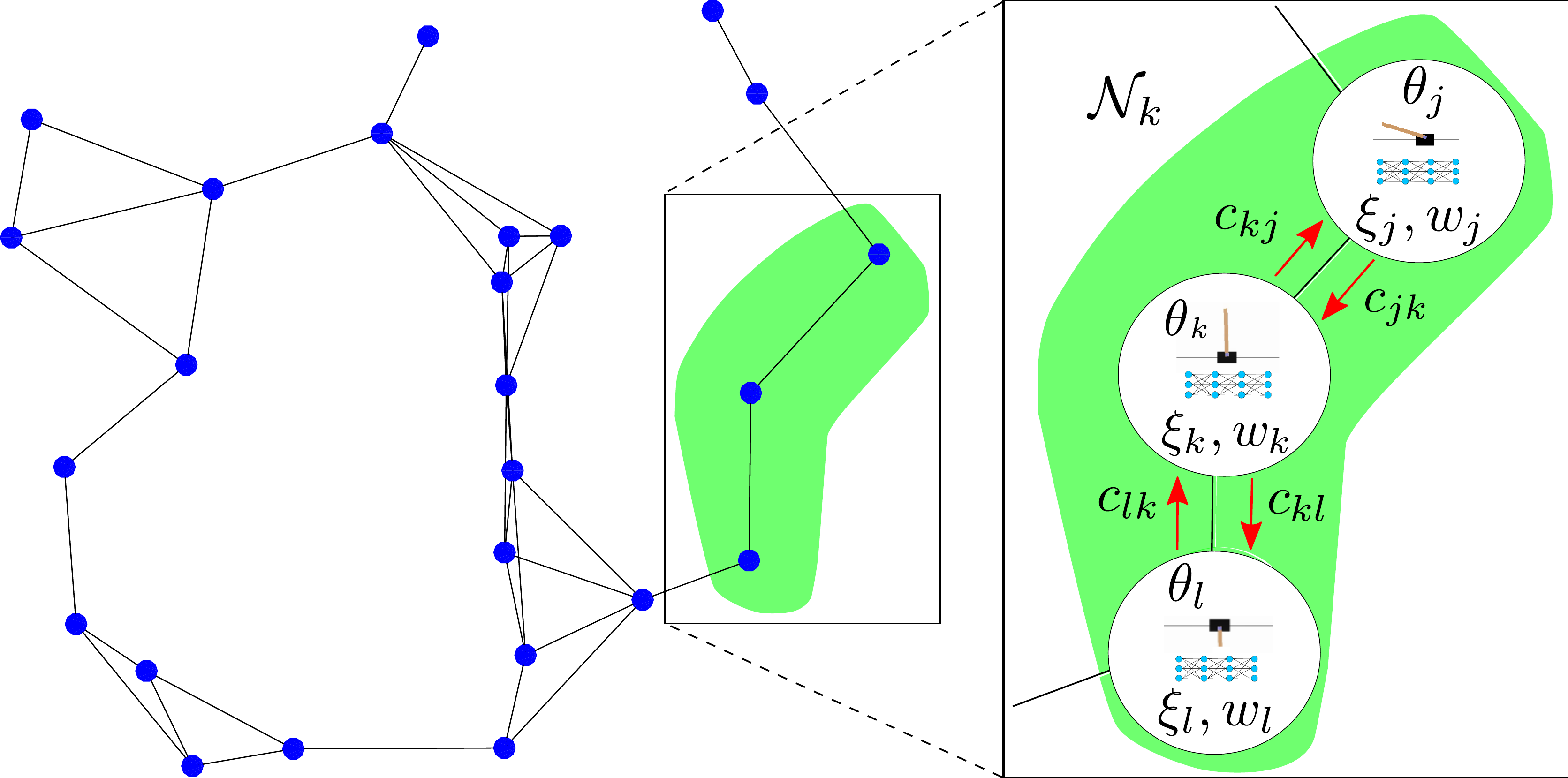}
\caption{Example of network and detailed neighborhood.
Blue nodes represent agents, and edges represent their connectivity.
This network consists of $N=25$ agents, with average neighborhood size:
$
	|\overline{\N}| = \frac{1}{N}\sum_{k=1}^N |\N_k| 
= 
	4.2
$.
On the right, the figure zooms over neighborhood $\N_k$ (green area), 
where each agent $k$ runs its own task instance of the swing-up cart-pole task (with diffent pole lenght and mass, defined by parameter $\theta_k$).
As explained later, agent $k$ transmits its neural network weights, $\xi_{k,i}$ and $w_{k,i}$, to its neighbors $j$ and $l$;
and it receives their weights $\xi_{j,i}, w_{j,i}$ and $\xi_{l,i}, w_{l,i}$, 
and combines them with coefficients $c_{jk}$ and $c_{lk}$, respectively.}
\label{fig:network}
\end{figure}

We have a network of $N$ agents, which is expressed as a graph, $\N$.
Each node, denoted $k=1,\ldots,N$, corresponds to an agent that learns from data coming from its own task\footnote{For simplicity, 
we assume that each agent is allocated with one task,
similar to \cite{el2017scalable}. 
The extension to multiple tasks per agent is trivial.},
with parameter $\theta_k \sim f$. 
The edges in the graph represent communication links.
We assume that the graph is \textit{connected} (i.e., there is at least one path between every pair of agents).

The graph can be represented by a non-negative matrix of size $N \times N$, denoted
%
$
	C 
\defeq
	\left( c_{lk} \right)_{l,k=1}^N
$,
such that the element $c_{lk} \ge 0$ represents the weight given by agent $k$ to information coming from $l$.
Each agent $k$ is only allowed to communicate within its own neighborhood, $\N_k$, 
which is defined as the set of agents to which it is directly connected, including $k$ itself:
%
$
	\N_k 
\defeq 
	\left\{
		l \in \{1,\ldots,N \}: c_{lk} > 0
	\right\}
$.
%
%
%
%
In order to ensure that the information flows through the network, 
we require the following standard conditions on the connectivity matrix $C$,
which make it doubly-stochastic and primitive \cite{sayed2014adaptation,valcarcel2013distributed}:
\begin{IEEEeqnarray}{rCl}
	C^\T \1 = \1
,\;
	C \1 = \1
,\;
\text{ and }\;
	c_{lk} 
&
\ge
&	
	0
,\; k,l = 1,\ldots,N
,\quad
\label{eq:non-negative-coefficients}
\\
	\Tr \:[C]
&
>
& 
	0
,\quad
\label{eq:aperiodic}
\end{IEEEeqnarray}
where $\1$ is a vector of ones.
Although conditions \eqref{eq:non-negative-coefficients}--\eqref{eq:aperiodic} seem restrictive, 
it turns out that there are procedures for every agent $k$ to find the weights $\left\{ c_{lk} \right\}_{l\in\N_k}$ in a fully distributed manner,
such that $C$ satisfies the required conditions. 
One of such procedures is the Hastings rule \cite{zhao_performance_2012}, \cite[p.492]{sayed2014adaptation}.
%

%
\section{Multitask Actor-Critic from Duality Theory}
\label{sec:actor-critic}
%

In this section, we reformulate MRL as a linear program (LP),
and show that by applying dual ascent to the Lagrangian,
we get a tabular model-based actor-critic method that solves \eqref{eq:empirical-risk}.

Introduce the average dynamics:
\begin{IEEEeqnarray}{rCl}
	\overline{\Ps}(s'|s,a)
&
\defeq
&
	\frac{1}{N}
	\sum_{k=1}^N
		\Ps_{\theta_k}(s'|s,a)
.
\end{IEEEeqnarray}
The following lemma is key in our derivations, 
since it allow us to consider the multitask problem as a single MDP, 
with state-action transitions given $\overline{\Ps}$.
%
%
\begin{lemma}
\label{theorem:cooperative-transition-matrix-stochastic}
$\overline{\Ps}$ is a row-stochastic matrix.
\end{lemma}
%
%
%
%
\begin{proof}
Stochastic matrices lie in a compact convex set \cite[p.527]{horn1990matrix},
so that their convex combination lies in the same set \cite[p.24]{boyd2004convex}.
\end{proof}
%
%
%
Thus, we can use standard (i.e., single-task) optimal control results \cite[pp. 143--151]{Puterman2005} for our setting.
Let $\V$ denote the set of bounded real functions on $\St$, with componentwise partial order and norm $\|v\| \defeq \sup_{s\in\St}|v(s)|$.
%
%
\begin{corollary}
\label{corollary:cooperative-value-satisfies-bellman}
For any stationary policy, $\pi$,
the Bellman equation for the new MDP defined by $\overline{\Ps}$ is given by:
\begin{IEEEeqnarray}{rCl}
	v(s)
&
=
&
	\sum_{a\in\Ac}
	\pi(a|s)
	\left(
		r(s,a)
		+
		\gamma
		\sum_{s'\in\St}
			\overline{\Ps}(s'|s,a)
			v(s')			
	\right)
,
\quad\:
\label{eq:multitask-bellman-equation}
\end{IEEEeqnarray}
\end{corollary}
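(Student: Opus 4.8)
The plan is to prove the statement with the Banach fixed-point theorem on the space $\V$. First I would define the operator $T^\pi : \V \to \V$ that sends a function $v$ to the right-hand side of \eqref{eq:multitask-bellman-equation}, so that the claim reduces to: $T^\pi$ has a unique fixed point in $\V$, and that fixed point is $\overline{v}^\pi$. Recall that $\V$ endowed with the supremum norm $\|\cdot\|$ is a Banach space, hence complete, which is the ambient property the fixed-point argument requires.

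Next I would establish the two ingredients. First, $T^\pi$ maps $\V$ into itself: for bounded $v$, the boundedness of $\overline{r}$ (which follows from the bounded-reward assumption \eqref{eq:bounded-reward}, since $\overline{r}$ is a convex combination of the $r_{\theta_k}$) together with $\sum_{s'}\overline{\Ps}(s'|s,a) = 1$ keeps $T^\pi v$ bounded. Second, $T^\pi$ is a $\gamma$-contraction: for any $u,v \in \V$, the reward terms cancel and
\[
	\| T^\pi u - T^\pi v \|
\le
	\gamma\,\sup_{s}\sum_{a}\pi(a|s)\sum_{s'}\overline{\Ps}(s'|s,a)\,|u(s') - v(s')|
\le
	\gamma\,\| u - v \|,
\]
where the decisive point is that the rows of $\overline{\Ps}$ sum to one --- exactly Theorem \ref{theorem:cooperative-transition-matrix-stochastic} --- so that the averaging over next states is a genuine convex combination and cannot expand the norm. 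Since $0 \le \gamma < 1$, Banach's theorem yields a unique fixed point in $\V$.

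The remaining step, and the one I expect to be the main obstacle, is identifying this fixed point with $\overline{v}^\pi$. The clean route is to invoke the preceding Theorem: because $\overline{\Ps}$ is row-stochastic and $\overline{r}$ is bounded, the tuple $(\St,\Ac,\overline{\Ps},\overline{r},\gamma)$ is a bona fide discounted MDP, and $\overline{v}^\pi$ is its value function under $\pi$, which by the standard single-task characterization \cite[pp. 143--151]{Puterman2005} satisfies \eqref{eq:multitask-bellman-equation}. The delicate aspect is that the argument must run through this single averaged MDP rather than through a termwise sum of the $N$ per-task Bellman equations for the $v_{\theta_k}^\pi$: each per-task recursion pairs the kernel $\Ps_{\theta_k}$ with its own value $v_{\theta_k}^\pi$, and one must be careful about how such products interact with averaging over $k$. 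It is precisely the row-stochasticity supplied by the Theorem that legitimizes collapsing the whole family into one admissible transition kernel, after which the uniqueness obtained from the contraction step closes the proof.
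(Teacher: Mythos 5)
Your route is the same one the paper intends: Theorem \ref{theorem:cooperative-transition-matrix-stochastic} certifies that $(\St,\Ac,\overline{\Ps},\overline{r},\gamma)$ is a legitimate discounted MDP, and the corollary is then read off from the standard policy-evaluation result. The paper writes no further proof beyond citing Puterman; you additionally spell out the Banach contraction argument underlying that citation, and that part (the self-map and $\gamma$-contraction properties of $T^\pi$, hence existence and uniqueness of a fixed point in $\V$) is correct.

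However, the step you yourself flag as ``the main obstacle'' --- identifying that unique fixed point with $\overline{v}^\pi$ --- is a genuine gap, and the appeal to row-stochasticity does not close it. Row-stochasticity guarantees that the fixed point of $T^\pi$ is the value function of the \emph{averaged} MDP, i.e.\ the expected discounted $\overline{r}$-return along a trajectory generated by the single kernel $\overline{\Ps}$. But the paper defines $\overline{v}^\pi \defeq \frac{1}{N}\sum_k v_{\theta_k}^\pi$, an average of value functions each generated by its \emph{own} kernel $\Ps_{\theta_k}$. Averaging the $N$ per-task Bellman equations gives
\begin{IEEEeqnarray*}{rCl}
\overline{v}^\pi(s) & = & \sum_{a\in\Ac}\pi(a|s)\Bigl(\overline{r}(s,a)+\gamma\,\frac{1}{N}\sum_{k=1}^N\sum_{s'\in\St}\Ps_{\theta_k}(s'|s,a)\,v_{\theta_k}^\pi(s')\Bigr),
\end{IEEEeqnarray*}
and the inner term is an average of \emph{matched} products $\Ps_{\theta_k}v_{\theta_k}^\pi$, not the product of averages $\overline{\Ps}\,\overline{v}^\pi$ that \eqref{eq:multitask-bellman-equation} requires. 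Probabilistically, the fixed point of $T^\pi$ corresponds to re-drawing the task at every time step, whereas $\overline{v}^\pi$ draws it once per trajectory; the two agree on the $t=0$ and $t=1$ terms but diverge from $t\ge 2$ onward whenever the $\Ps_{\theta_k}$ differ. So your contraction argument is sound, but the identification of its fixed point with $\overline{v}^\pi$ needs an extra hypothesis (e.g.\ all tasks share the same transition kernel) or a redefinition of $\overline{v}^\pi$ as the value function of the averaged MDP --- a gap your write-up shares with the paper's one-line proof, though you deserve credit for locating exactly where it sits.
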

%
%
%
Introduce the multitask Bellman operator $\overline{T}:\V \mapsto \V$:
\begin{IEEEeqnarray}{rCl}
	\left( \overline{T} v \right)(s)
\defeq
	\max_{a \in \Ac}
	\left[
		r(s,a)
	    +
		\gamma
			\sum_{s'\in \St}\overline{\Ps}(s'|s,a)
			v(s')
	\right]
.
\label{eq:Bellman-operator-vector}
\end{IEEEeqnarray}
Similar to single task optimal control theory \cite[Sec. 9.1]{Puterman2005},
we can reformulate \eqref{eq:empirical-risk} as an LP.
\begin{IEEEeqnarray}{rCl}
\begin{aligned}
\underset{ v \in \Re^{|\St|} }{\rm minimize} 	
	&\;\;
		\mu^\T v
\\
{\rm s.t.}
	\quad\;\; &
		v(s)
	\ge 
		r(s,a)
		+
		\gamma
		\sum_{s' \in \St}
			\overline{\Ps} (s'|s,a) v(s')
\qquad\\
	&
	\forall (s,a) \in \St \times \Ac
,
\end{aligned}
\label{eq:cooperative-linear-program}
\end{IEEEeqnarray}
%


Since problem \eqref{eq:cooperative-linear-program} satisfies Slater condition, 
strong-duality holds \cite[Sec. 5.2.3]{boyd2004convex} and the primal and dual optimal values are attained and equal.
The Lagrangian of \eqref{eq:cooperative-linear-program} is given by:
\begin{IEEEeqnarray}{rCl}
	L(v,d)
&
=
&
	\mu^\T v
	+
	\sum_{(s,a)\in\St\times\Ac}
	d(s,a)
	\bigg(
    	r(s,a)
\notag\\
&&
		+
\:
        \gamma
		\sum_{s' \in \St}
        	\overline{\Ps} (s'|s,a) v(s')
        	-
        	v(s)
	\bigg)
,
\label{eq:cooperative-lagrangian}
\end{IEEEeqnarray}
where the dual variable 
$
	d 
\defeq
	\left(
		d(s,a)
	\right)_{(s,a)\in\St\times\Ac}
\in
	\Re^{|\St||\Ac|}
$ 
is a nonnegative vector of length $|\St||\Ac|$.
Let $v^\star$ denote the optimal primal variable.
Let $d^{\star}$ denote an optimal dual variable, which might not be unique.
The saddle point condition of \eqref{eq:cooperative-lagrangian} is given by:
\begin{IEEEeqnarray}{rCl}
	\min_{v} 
	\max_{d}
	L(v,d)
&
=
&
	L(v^{\star}, d^{\star})
=
	\max_{d}
	\min_{v} 
	L(v,d)
.
\quad
\label{eq:saddle-point}
\end{IEEEeqnarray}
%
%
%
There are multiple approaches to find a saddle point.
We focus on the \textit{dual-ascent} scheme
\cite{arrow1958studies},
which consists in alternating between:
\textit{i)} Finding a primal solution, 
given the dual variable;
and 
\textit{ii)} ascending in the direction of $\nabla_d L(v,d)$, 
given the primal variable.

First, we show how to update the \textit{primal} variable given $d$:
\begin{IEEEeqnarray}{rCl}
	v
\leftarrow
	\arg\min_{v\in\Re^{|\St|}} L(v,d)
.
\end{IEEEeqnarray}
Since problem \eqref{eq:cooperative-linear-program} is linear, if we derive the KKT conditions, we can see that the first-order condition holds $\forall v \in \Re^{|\St|}$.
Thus, the only condition that depends on $v$ is \textit{complementary slackness}:
\begin{IEEEeqnarray}{rCl}
	\sum_{(s,a)\in \St \times \Ac}
&&
		d(s,a)
		\left(
			r(s,a)
			+
		    \gamma
		    \sum_{s' \in \St}
				\overline{\Ps}(s'|s,a)
				v(s')
			-
			v(s)
		\right)
\notag\\
&&=
\:
	0
.
\label{eq:cooperative-original-complementary-slackness}
\end{IEEEeqnarray}
Similar to the standard single-task problem \cite[Sec. 6.9]{Puterman2005}, 
it can be shown that our dual variable is the discounted steady-state state-action visitation measure,
so that we can obtain the policy from $d$:
\begin{IEEEeqnarray}{rCl}
	\pi(a|s)
=
	\frac{d(a,s)}{\rho(s)}
,
\label{eq:policy-from-dual-variable}
\;\;\text{where}\;\;
	\rho(s) 
\defeq 
	\sum_{a'\in \Ac} d(a',s)
.
\quad
\end{IEEEeqnarray}
Hence, the Bellman equation \eqref{eq:multitask-bellman-equation},
typically used to derive the \textit{critic} in actor-critic methods,
is sufficient to guarantee \eqref{eq:cooperative-original-complementary-slackness}.

Second, for the \textit{dual} variable, we simply perform gradient ascent in the Lagrangian, yielding a recursion of the form:
\begin{IEEEeqnarray}{rCl}
	d
&
\gets
&
	\left[
		d
		+
		\alpha
		\nabla_d L(v,d)
	\right]^+
,
\label{eq:dual-gradient-ascent-update-of-cooperative-problem}
\end{IEEEeqnarray}
where $\alpha$ is the step-size,
$[\cdot]^+$ denotes projection on the nonnegative quadrant,
and the $\nabla_d $ denotes gradient w.r.t. dual variable $d$:
\begin{IEEEeqnarray}{rCl}
	\nabla_d L(v,d) 
&
= 
&
	\left(
		\frac{\partial L(v,d)}{\partial d(s,a)}
	\right)_{ (s,a) \in \St \times \Ac }
.
\label{eq:gradient-Lagrangian-wrt-dual-variable}
\end{IEEEeqnarray}
Interestingly, 
note that the partial derivatives of the Lagrangian in \eqref{eq:gradient-Lagrangian-wrt-dual-variable} are indeed 
the so named \textit{advantage function} 
extended to our multitask problem:
\begin{IEEEeqnarray}{rCl}
	A(s,a)
&
\defeq
&
	r(s,a)
	+
	\gamma
	\sum_{s' \in \St}
        	\overline{\Ps} (s'|s,a) v(s')
        	-
        	v(s)
\notag\\
&
=
&
	\frac{\partial L(v,d)}{\partial d(s,a)}
.
\quad
\label{eq:multitask-advantage}
\end{IEEEeqnarray}
If we learn $d^{\star}$, we can use \eqref{eq:policy-from-dual-variable} to obtain $\pi^{\star}$,
so that recursion \eqref{eq:dual-gradient-ascent-update-of-cooperative-problem} can be seen as an \textit{actor} update.
Thus, \eqref{eq:multitask-bellman-equation} and \eqref{eq:dual-gradient-ascent-update-of-cooperative-problem} define a novel \textit{tabular model-based} \textit{actor-critic} method.
In the following, we extend this approach to a model-free distributed actor-critic method with neural network approximations.

%

%
\section{Distributed Deep Actor-Critic}
\label{sec:distributed-deep-actor-critic}
%
%

In order to use \textit{diffusion} strategies \cite{sayed2014adaptation} to derive a distributed optimization method,
we have to express the global objective as a convex combination of each agent's local objective. 
Thus every agent can optimize its objective from local data;
and by communicating with their neighbors,
all agents converge to a common solution that optimizes the global objective.
We do this for both critic and actor.

%
\subsection{Distributed policy evaluation: Critic}
%
%

When computing the critic for large (or continuous) state-action sets, 
it is common to approximate the value function with some parametric function $v_{\xi}(s) \approx v(s)$, 
where $\xi \in \Re^{M_v}$ denotes the parameter vector of length $M_v$.
%
We choose neural networks with multiple hidden layers (i.e., deep learning) as parametric approximators.
Hence, we can learn the network weights, $\xi$, 
by transforming \eqref{eq:multitask-bellman-equation} into a nonlinear regression problem:
\begin{IEEEeqnarray}{rCl}
\underset{  \xi \in \Re^{M_v} }{\rm minimize} 	
&&
\quad
	J(\xi)
\defeq
		\E
	    \left[
	      \left (
	          v_{\xi} \left( \bm{s}_t \right)
	          -
	          \overline{\bm{y}}_t
	      \right )^2
	    \right]
,
\label{eq:distributed-expected-bellman-error}
\end{IEEEeqnarray}
where the target values are given by:
\begin{IEEEeqnarray}{rCl}
	\overline{\bm{y}}_t
&
\defeq
&
	r(\bm{s}_t, \bm{a}_t) 
	+ 
	\gamma
	\sum_{s' \in \St}
    	\overline{\Ps} (s'|\bm{s}_t,\bm{a}_t) v_{\xi}(s')
.
\end{IEEEeqnarray}

In order to derive a diffusion-based distributed critic, 
we have to reformulate the problem as minimizing the convex combination of costs that depend only on a single task each.
The cost for each individual task takes the form:
\begin{IEEEeqnarray}{rCl}
	\widetilde{J}_k(\xi)
&
\defeq
&
	\E
	\left[
		\left (
			v_{\xi} \left( \bm{s}_{t} \right)
			-
			\bm{y}_{k,t}
      \right )^2
	\right]
,\;
	k=1,\ldots,N
,
\label{eq:individual-critic-cost}
\end{IEEEeqnarray}
where $\bm{y}_{k,t}$ is the target from task $k$ at time $t$, given by
\begin{IEEEeqnarray}{rCl}
	\bm{y}_{k,t}
&
=
&
	r_{\theta_k}(\bm{s}_{t}, \bm{a}_{t})
	+ 
    \gamma 
	\sum_{s' \in \St}
    	\Ps_{\theta_k} (s'|\bm{s}_{t},\bm{a}_{t}) v_{\xi}(s')
,\notag
\end{IEEEeqnarray}
such that 
$
	 \overline{\bm{y}}_t
=
	1/N
	\sum_{k=1}^N
	\bm{y}_{k,t}
$.
Now, in order to obtain a cost that is a combination of the individual costs,
we can use  Jensen's inequality to upper bound $J(\xi)$ by another function, 
$
	\widetilde{J}(\xi)
$, and use this upper bound as surrogate cost:
\begin{IEEEeqnarray}{rCl}
	\widetilde{J}(\xi)
&
\defeq
&
	\frac{1}{N}
	\sum_{k=1}^N
	\widetilde{J}_k(\xi)
=
	\frac{1}{N}
	\sum_{k=1}^N
	\E
    \left[
		\left (
			v_{\xi} \left( \bm{s}_{t} \right)
			-
			\bm{y}_{k,t}
      \right )^2
    \right]
\notag\\
&
\ge 
&
	\E
    \left[
		\left (
			\frac{1}{N}
			\sum_{k=1}^N
				\left(
					v_{\xi} \left( \bm{s}_{t} \right)
					-
					\bm{y}_{k,t}
				\right)
      \right )^2
    \right]
=
	J(\xi)
.
\qquad
\end{IEEEeqnarray}
%
%
%

Now, we can apply \textit{diffusion} stochastic-gradient-descent (SGD) strategies \cite{sayed2014adaptation},
which consist of two steps: \textit{adaptation} and \textit{combination}.
During the \textit{adaptation} step, each agent performs SGD on its individual cost, $\widetilde{J}_k$, 
to obtain some intermediate-parameter update.
Then, each agent \textit{combines} the intermediate-parameters from its neighbors.
These two steps are described by the following updates, which run in parallel for all agents $k=1,\ldots,N$:
\begin{IEEEeqnarray}{rCl}
\IEEEnosubnumber
	\widehat{\xi}_{k,i+1}
&
=
&
	\xi_{k,i}
	-
	\alpha_{i+1}
	\widehat{\nabla}_\xi \widetilde{J}_k (\xi_{k,i})
,
\IEEEyessubnumber \label{eq:value-adaptation}
\\
	\xi_{k,i+1}
&
=
&
	\sum_{l \in \N_k}
		c_{lk}
		\widehat{\xi}_{l,i+1}
,
\IEEEyessubnumber \label{eq:value-combination}
\end{IEEEeqnarray}
where 
$i$ is the iteration index;
$\alpha_i$ is the step-size; 
and 
$\widehat{\nabla}_\xi \widetilde{J}_k (\xi_{k,i})$ is the stochastic gradient evaluated at $\xi_{k,i}$,
estimated from samples $\left\{ \left( s_{k,t}, a_{k,t}, r_{k,t+1}, s_{k,t+1} \right) \right \}_{t=0}^{T_{k,i}}$ of the $i$-th episode, of length $T_{k,i}$, gathered by the $k$-th agent.
We use Monte Carlo estimates for the target
$
	y_{k,t}
=
	\sum_{j=t}^{T_{k,i}}
		\gamma^{j-t}
		r_{k,j+1}
$
(for simplicity),
where $r_{k,j+1} \defeq r_{\theta_k}(s_{k,j},a_{k,j})$ is a shorthand.
Then, the stochastic gradient is given by:
\begin{IEEEeqnarray}{rCl}
	\widehat{\nabla}_\xi \widetilde{J}_k (\xi_{k,i})
&
=
&
	\frac{1}{T_{k,i}}
	\sum_{t=0}^{T_{k,i}}
		\nabla_\xi v_{\xi_{k,i}} \left( s_{k,t} \right)
		\left(
			v_{\xi_{k,i}} \left( s_{k,t} \right)
			-
			y_{k,t}
		\right)
.
\notag\\
\label{eq:critic-stochastic-gradient}
\end{IEEEeqnarray}

We remark that each agent learns from its current episode, without replay buffer, 
similar to A3C \cite{mnih2016asynchronous}, but in a fully distributed fashion, 
as opposed as having multiple threads updating the same neural network at a single location.

%
\subsection{Distributed policy gradient: Actor}
%
%

For large state-action sets, it is convenient to approximate the policy with a parametric function.
Again, we consider expressive deep neural networks for the policy.
From \eqref{eq:policy-from-dual-variable},
we can rewrite the Lagrangian as:
\begin{IEEEeqnarray}{rCl}
	L(v,\pi,\rho)
&
=
&
	\mu^\T v
	+
	\sum_{(s,a)\in\St\times\Ac}
	\pi(a|s)
	\rho(s)
   	A(s,a)
.
\quad
\label{eq:cooperative-lagrangian-in-policy}
\end{IEEEeqnarray}

Let $\pi_w \approx \pi$ denote the parametric approximation of the actual policy,
where $w \in \Re^{M_\pi}$ is the parameter vector of length $M_\pi$.
Replacing $\pi$ with $\pi_w$ in \eqref{eq:cooperative-lagrangian-in-policy},
we obtain an approximate Lagrangian,
$
	\widetilde{L}(v,w,\rho)
\approx
	L(v,\pi,\rho)
$, of the form:
\begin{IEEEeqnarray}{rCl}
	\widetilde{L}(v,w,\rho)
&
=
&
	\mu^\T v
	+
	\sum_{(s,a)\in\St\times\Ac}
	\pi_w(a|s)
	\rho(s)
	A(s,a)
.
\quad\;\;
\label{eq:approximate-lagrangian}
\end{IEEEeqnarray}
Thus, in order to approximate the saddle point condition \eqref{eq:saddle-point},
we can move in the ascent direction of the gradient of \eqref{eq:approximate-lagrangian} w.r.t. the policy parameter, which is given by:
\begin{IEEEeqnarray}{rCl}
	\nabla_w \widetilde{L}
&&
	(v,w,\rho)
=
	\nabla_{\pi_w} \widetilde{L} (v,w,\rho)
	\nabla_w \pi_w
\notag\\
&&=\:
	\left(
		\nabla_w \pi_w(a_1|s_1)
		,\ldots,		
		\nabla_w \pi_w(a_{|\Ac|}|s_{|\St|})
	\right)^{\T}
\notag\\
&&\quad\;
	\left(
		\frac{\partial \widetilde{L} (v,w,\rho)}{\partial \pi_w(a|s)}
	\right)_{(s,a)\in\St\times\Ac}
\notag\\
&&=\:
	\sum_{(s,a)\in \St \times \Ac}
		\nabla_w \pi(a|s)
		\frac{\partial L(v,w,\rho)}{\partial \pi_w(a|s)}
\notag\\
&&=\:
	\sum_{s \in \St}
		\rho(s)
		\sum_{a \in \Ac}
			\nabla_{w} \pi_{w}(a|s)
			A(s,a)
\notag\\
&&=\:
	\sum_{s \in \St}
		\rho(s)
		\sum_{a \in \Ac}
			\pi_{w}(a|s)
			\nabla_{w} \log \pi_{w}(a|s)
			A(s,a)
,
\qquad
\label{eq:gradient-approximate-lagrangian}
\end{IEEEeqnarray}
where we used:
$
	\nabla_{w} \pi_{w}(a|s)
=
	\pi_{w}(a|s) \nabla_{w} \log \pi_{w}(a|s)
$.

Interestingly, \eqref{eq:gradient-approximate-lagrangian} is similar to previous \textit{policy gradient} theorems \cite{Sutton1999policygradient}, 
with the important difference that it yields the advantage function explicitly;
while previous works motivated the \textit{baseline} mainly as a variance reduction technique 
\cite{williams1992simple,BhatnagarNaturalActorCritic2009}.

In order to derive a fully \textit{distributed} actor,
let us write the multitask advantage function \eqref{eq:multitask-advantage} as the convex combination of advantage functions for the individual tasks:
\begin{IEEEeqnarray}{rCl}
	A(s,a)
&
=
&
	\frac{1}{N}
	\sum_{k=1}^N
		A_k(s,a)
,
\\
	A_k(s,a)
&
\defeq
&
	r_{\theta_k}(s,a)
	+
	\gamma
	\sum_{s' \in \St}
        	\Ps_{\theta_k} (s'|s,a) v(s')
        	-
        	v(s)
.\quad\;\;\:
\label{eq:individual-advantage}
\end{IEEEeqnarray}
Hence, we write the approximate Lagrangian for each task:
\begin{IEEEeqnarray}{rCl}
	\widetilde{L}_k(v,w,\rho)
&
\defeq
&
	\mu_{\theta_k}^\T v
	+
	\sum_{(s,a)\in\St\times\Ac}
	\pi_w(a|s)
	\rho(s)
	A_k(s,a)
,
\quad\;\;
\label{eq:individual-approximate-lagrangian}
\end{IEEEeqnarray}
such that 
%
$
	\widetilde{L}(v,w,\rho)
=
	\frac{1}{N}
	\sum_{k=1}^N
		\widetilde{L}_k(v,w,\rho)
$ 
.
%

Similar to the critic, once we have expressed the multitask approximate Lagrangian as the convex combination of the approximate Lagrangian of each individual task,
we can apply diffusion SGD to perform the actor update,
with smaller step-size, $\beta_{i+1} \le \alpha_{i+1}$, to approximate convergence of the critic at every actor update:
%
\begin{IEEEeqnarray}{rCl}
\IEEEnosubnumber
	\widehat{w}_{k,i+1}
&
=
&
	w_{k,i}
	+
	\beta_{i+1}
	\widehat{\nabla}_w \widetilde{L}_k(v_{\xi_{k,i}},w_{k,i},\rho)
,
\IEEEyessubnumber \label{eq:policy-adaptation}
\\
	w_{k,i+1}
&
=
&
	\sum_{l \in \N_k}
		c_{lk}
		\widehat{w}_{l,i+1}
,
\IEEEyessubnumber \label{eq:policy-combination}
\end{IEEEeqnarray}
where each agent estimates its stochastic gradient as:
\begin{IEEEeqnarray}{rCl}
	\widehat{\nabla}_{w} \widetilde{L}_k (v_{\xi_{k,i}},w,\rho)
&=&
	\frac{1}{T_{k,i}}
	\sum_{t=0}^{T_{k,i}}
	\nabla_{w} \log \pi_{w}(a_{k,t}|s_{k,t})
 	\widehat{A}_{k,t}
,
\notag\\
\label{eq:stochastic-gradient-individual-approximate-lagrangian}
\end{IEEEeqnarray}
and $\widehat{A}_{k,t}$ can be any approximation of the advantage function \cite{schulman2017proximal}.
We use the simple estimate:
\begin{IEEEeqnarray}{rCl}
	\widehat{A}_{k,t}
&
=
&
	\sum_{j=t}^{T_{k,i}}
		\gamma^{j-t}
		r_{k,j+1}
		-
		v_{\xi_{k,i}}(s_{k,t})
.
\label{eq:approximate-advantage}
\end{IEEEeqnarray}
%
%
%
Two remarks: \textit{i)} In order to simplify the implementation, we set the target $y_{k,t}$ to be the empirical return,
so that the stochastic gradient of the critic in \eqref{eq:critic-stochastic-gradient} is the negative advantage estimate:
$ \widehat{\nabla}_\xi \widetilde{J}_k (\xi_{k,i}) = - \widehat{A}_{k,t}$.
\textit{ii)} Note that replacing ${\xi_{k,i}}$ with ${\xi_{k,i+1}}$ in \eqref{eq:policy-adaptation}--\eqref{eq:policy-combination}
and \eqref{eq:stochastic-gradient-individual-approximate-lagrangian}--\eqref{eq:approximate-advantage}, 
in a Gauss-Seidel fashion, 
might lead to faster convergence.

A detailed description of Diff-DAC is given in Algorithm \ref{alg:diff-dac}.
\begin{algorithm}[!h]
\caption{Diff-DAC. This algorithm runs in parallel at every agent $k=1,\ldots,N$.}
\label{alg:diff-dac}
\textbf{Input:} Maximum number of episodes $E$,
	maximum number of steps per episode $T$,
	learning rate sequences ($\alpha_i, \beta_i$).\\
\begin{algorithmic}[1]
	\STATE Initialize critic, $v_{\xi_{k,0}}$, and actor, $\pi_{w_{k,0}}$, networks, $\forall k \in \N$.
    \STATE Initialize episode counter, $i = 0$.
    \STATE \textbf{while} $i < E$\textbf{:}
    \STATE \hspace{1em} Initialize empty trajectory, $\M_k = \{\}$.
    \STATE \hspace{1em} Initialize step counter: $t = 0$.
    \STATE \hspace{1em} Observe $s_{k,0}$.
    \STATE \hspace{1em} \textbf{while} $t < T$ and not terminal state\textbf{:}
    \STATE \hspace{2em} Select action $a_{k,t} \sim \pi_{w_{k,t}}(\cdot | s_{k,t})$.
    \STATE \hspace{2em} Execute $a_{k,t}$ and observe $r_{k,t+1}$ and $s_{k,t+1}$.
    \STATE \hspace{2em} Store tuple $(s_{k,t}, a_{k,t}, r_{k,t+1}, s_{k,t+1})$ in $\M_k$.
	\STATE \hspace{2em} Update step counter: $t \leftarrow t + 1$.
	\STATE \hspace{1em} \textbf{end while}
    \STATE \hspace{1em} \textbf{for} each sample $t \in \M_{k}$\textbf{:}
    \STATE \hspace{2em} Compute advantage function:\\
   	\hspace{2em} $
			\widehat{A}_{k,t}
		=
			\sum_{j=t}^{|\M_{k}|}
				\gamma^{j-t}
				r_{k,j+1}
				-
				v_{\xi_{k,i}}(s_{k,t})
     	$
	\STATE \hspace{1em} \textbf{end for}
    \STATE \hspace{1em} Compute distributed critic gradient:
		\begin{IEEEeqnarray*}{rCl}
			\widehat{\xi}_{k,i+1}
		&
		=
		&
			\xi_{k,i}
			+
			\frac{\alpha_{i+1}}{|\M_{k}|}
			\sum_{t=0}^{|\M_{k}|}
				\nabla_\xi v_{\xi_{k,i}} \left( s_{k,t} \right)
				\widehat{A}_{k,t}
		\\
			\xi_{k,i+1}
		&
		=
		&
			\sum_{l \in \N_k}
				c_{lk}
				\widehat{\xi}_{l,i+1}
		\end{IEEEeqnarray*}
		%
		%
    \STATE \hspace{1em} Compute distributed actor update:
		\begin{IEEEeqnarray*}{rCl}
			\widehat{w}_{k,i+1}
		&
		=
		&
			w_{k,i}
\\
		&&
			+
		\:
			\frac{\beta_{i+1}}{|\M_{k}|}
			\sum_{t=0}^{|\M_{k}|}
				\nabla_{w} \log \pi_{w_{k,i}}(a_{k,t}|s_{k,t})
			 	\widehat{A}_{k,t}
\qquad
		\\
			w_{k,i+1}
		&
		=
		&
			\sum_{l \in \N_k}
				c_{lk}
				\widehat{w}_{l,i+1}
		\end{IEEEeqnarray*}
	\STATE \hspace{1em} Update episode counter: $i \leftarrow i +1$.
    \STATE \textbf{end while}
\end{algorithmic}
\textbf{Return:} Critic and actor weights: $\xi_{k,E}$, $w_{k,E}$.
\end{algorithm}

%
\section{Numerical Experiments}
%

We evaluate the performance of Diff-DAC on three MRL problems of varying levels of difficulty. We use $\gamma=0.99$ for all tasks:

\textbf{Cart-pole balance:} 
We use the OpenAI Gym \cite{OpenAI2016gym} implementation,
but with continuous force. The action follows a Gaussian distribution with mean in the interval $[-10, 10]$.
The episode finishes when the pole is beyond $12$ degrees from vertical, 
cart moves more than $2.4$ units from the center,
or run for $200$ time-steps.
The single task uses parameters $(0.1, 0.5, 1.0)$
for the pole mass, pole half-length and cart mass, respectively.
The MRL problem consists of $25$ tasks:
pole mass in $\{0.1, 0.325, 0.55, 0.775, 1 \}$,
pole length in $\{ 0.05, 0.1625, 0.275, 0.3875, 0.5 \}$,
and cart mass $1$. 

\textbf{Inverted pendulum:}
The pendulum consists of a rigid pole and an actuated joint, with maximum torque clipped to interval $[-2, 2]$. 
The pendulum starts at a random angle in $[-\pi, \pi]$, with uniform distribution.
The goal is to take the pendulum to the upright position and balance.
The MRL problem consists of $25$ tasks with mass in $\{ 0.8, 0.9, 1.0, 1.1, 1.2 \}$,
and length in $\{ 0.8, 0.9, 1.0, 1.1, 1.2 \}$. 
The single task pole mass and length are $(1.0, 1.0)$.

\textbf{Cart-pole swing-up:} 
We extend cart-pole balance to the case where the pole starts from the bottom and the task is to swing up the pole to the upright position and balance. 
The reward function is $r = \frac{2}{1 + e^{d}} + \cos(\psi)$, 
where $d$ is the Euclidean distance of the pole from the track center and upright position, 
and $\psi$ is the pole angle.
This is a much more difficult task than standard cart-pole and more difficult than the inverted pendulum due to more complex dynamics.
The cart-pole swing-up MRL problem consists of $25$ tasks,
where pole mass is in $\{0.1, 0.2, 0.3, 0.4, 0.5 \}$,
pole half-length is in $\{ 0.2, 0.4, 0.6, 0.8, 1.0 \}$,
and cart mass is $0.5$. The single task uses parameters $(0.5, 0.25, 0.5)$
for the pole mass, pole half-length and cart mass respectively.

We compare Diff-DAC with Dist-MTLPS for the MRL problem in the cart-pole balance environment.
In particular, we compare against two variants of Dist-MTLPS, which consist in using two standard policy search methods, namely Reinforce \cite{williams1992simple} and PoWER \cite{kober2009policy}, 
for solving the individual tasks.
We only compare Diff-DAC with Dist-MTLPS in the cart-pole balance task, 
since the other two environments are uncontrollable with linear policies from raw data.
Our goal is to compare the performance of the single but expressive neural network policy provided by Diff-DAC with the task-specialised but less expressive linear policies provided by Dist-MTLPS. 

We also compare Diff-DAC with Cent-AC, which has only one agent
(central coordinator) that gathers and process samples from all the tasks synchronously and has the same neural network architectures and hyperparameters as Diff-DAC.
We remark that we use two versions of exactly the same vanilla actor-critic algorithm, where their only difference is whether there is a single agent with access to all the data (Cent-AC) or multiple networked agents with access to local datasets (Diff-DAC). 

Although experimenting with more environments and benchmarking with more algorithms would be helpful, 
testing in these simple environments is already useful, since they illustrate the behavior of the algorithms, without having to handle complex neural network architectures.

The network consists of $N=25$ agents, 
randomly deployed in a 2D world, 
with average degree $|\overline{\N}| \defeq \sum_{k=1}^N |\N_k| \approx 4.2$
(connectivity is determined by the distance between agents).
For Figure \ref{fig:connectivity}, we also include two additional networks of $N=25$ and $|\overline{\N}| =7.4$,
and $N=100$ and $|\overline{\N}| = 20$.
Matrix $C$ was obtained with the Hastigs-rule \cite[p.492]{sayed2014adaptation} in all cases, 
so that \eqref{eq:non-negative-coefficients}--\eqref{eq:aperiodic} hold.
We remark that the network topologies used in the experiments are not related to any form of task similarity, but they just reflect the sparse connectivity that appears naturally when agents and data are geographically distributed.

The critic and actor neural networks consist of $2$ hidden layers of $400$ neurons each with ReLu activation functions.
The output layer for the critic network is linear.
The output of the actor network includes a \textit{tanh} activation function that determines the mean of a normal distribution, and a \textit{Softplus} activation function that determines the variance for the normal distribution. 
We also included an extra penalty in the loss function equal to the entropy of the policy, 
with penalty coefficient $0.0005$.
Thus, both the mean and the variance are learned for the policy.
We use ADAM optimizer \cite{kingma2015adam},
with learning rate $0.01$ for critic and $0.001$ for actor. 
Diff-DAC performs a learning step ($i\leftarrow i+1$) every fifth episode.

The return of the tasks is reported as the (udiscounted) total rewards every 20 episodes and is averaged over $10$ test episodes at each point.
Figures show the median and first and third quartiles of the distribution of the average return of all the tasks.
Each epoch consists of $5$ episodes per epoch and per agent in Diff-DAC, 
and $5N$ episodes in total per epoch for Dist-MTLPS and Cent-AC, 
so that the three algorithms simulate the same number of episodes.
Every experiment was repeated at least $6$ times.

In Figure \ref{fig:cart-pole-balance} (bottom),
we observe that Diff-DAC learns faster than Dist-MTLPS Reinforce and reaches better asymptotic performance.
Dist-MTLPS PoWER converges faster than Diff-DAC, however the asymptotic performance of the latter is much better.
This is remarkable since Dist-MTLPS learns one different policy per task, 
while Diff-DAC learns a single policy common to all tasks.

We also observe that Diff-DAC converges slower than the Cent-AC, which was expected since the latter can compute the gradients with data from all tasks at every iteration, 
while the former has to wait until the parameters are diffused across the network.
However, Diff-DAC usually achieves higher asymptotic performance and less variance, in both single task and multitask problems,
which was also expected due to the already reported enhanced robustness against local optima of diffusion strategies for nonconvex optimization  problems \cite[Ch. 4]{valcarcel2017phdthesis}. This is shown in Figure \ref{fig:cart-pole-balance} (top), where Cent-AC tends to reach the optimal faster, but is unstable.
We guess that this may be alleviated by adding a \textit{replay buffer} \cite{mnih2013playing,Lillicrap2015Continuous}
or randomizing agents' samples to reduce their correlation, simulating asynchronocity (similar to asynchronous methods like A3C \cite{mnih2016asynchronous}).
However, our goal with these experiments is not to compare with SOTA centralized algorithms---which use several advancements to stabilise or improve performance---, but to evaluate whether diffusion is not only a feasible distributed strategy but also a valid alternative to stabilize learning.
Thus, we decided to use vanilla central actor-critic vs. vanilla distributed actor-critic, where their only difference is having a single agent with all the data vs. having multiple agents with local datasets.

Finally, Figure \ref{fig:connectivity} shows a simple experiment that studies the influence of the network topology. 
We evaluate Diff-DAC for the single-task cart-pole balance problem and see that for the same network size, $N=25$, 
a relatively sparse network, $|\overline{\N}| \approx N/6$, achieves performance similar to a more dense network, $|\overline{\N}| \approx N/3$.
In addition, we see that larger number of agents $N=100$ improves the asymptotic performance.

\begin{figure}[!h]
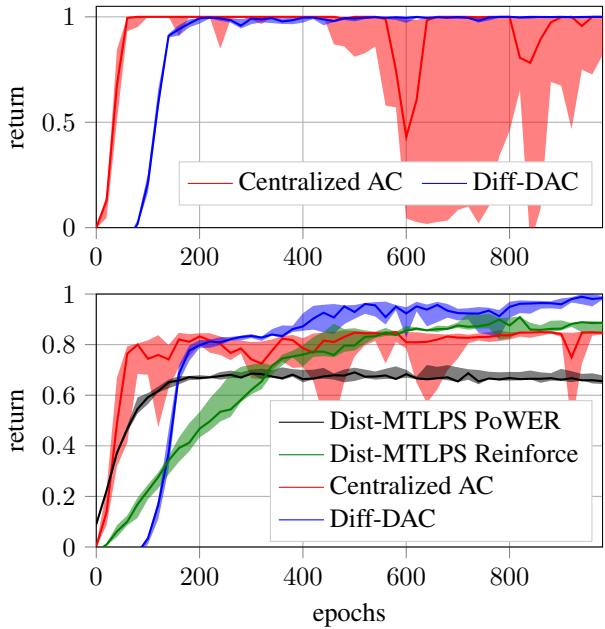

\input{cartpole_singletask_25_4_connectivity.tikz}\\
\input{cartpole_multitask_25_4_connectivity.tikz}
\vspace*{-10pt}
\caption{Cart-pole balance with continuous action for single-task (top) and multitask (bottom).
Cent-AC is faster than distributed approaches, but Diff-DAC achieves the best asymptotic performance.}
\label{fig:cart-pole-balance}
\end{figure}
\begin{figure}[!h]
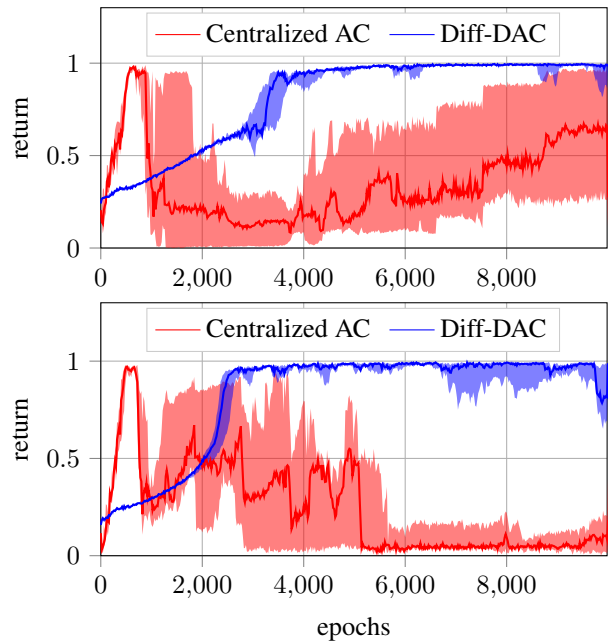

\input{pendulum_singletask_25_4_connectivity.tikz}
\input{pendulum_multitask_25_4_connectivity.tikz}
\vspace*{-10pt}
\caption{Inverted pendulum for single-task (top) and multitask (bottom).
Diff-DAC learns in both the single-task and multitask robustly. 
The central method learns the task quickly but is unstable.}
\end{figure}
\begin{figure}[!h]
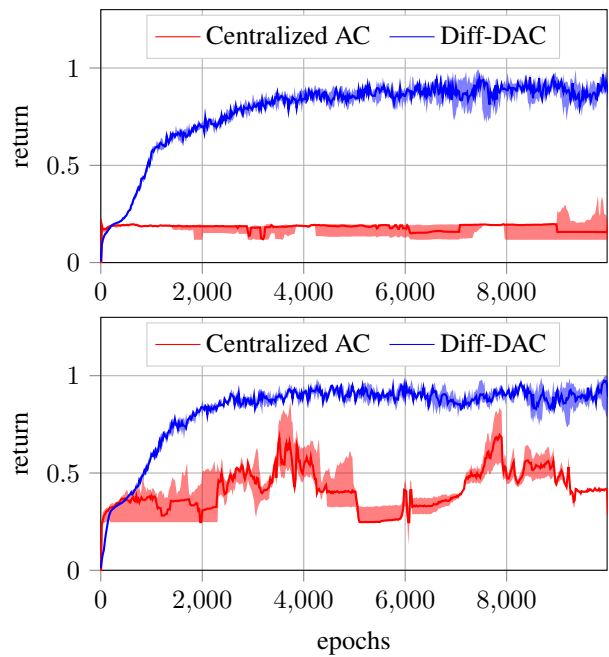

\input{cartpoleswingup_singletask_25_4_connectivity.tikz}\\
\input{cartpoleswingup_multitask_25_4_connectivity.tikz}
\vspace*{-10pt}
\caption{Cart-pole swing-up for single-task (top) and multitask (bottom).
Diff-DAC learns to swing-up and balance the pole consistently, while Cent-AC achieves much inferior performance.}
\label{fig:inverted-pendulum}
\end{figure}
\begin{figure}[!h]
\input{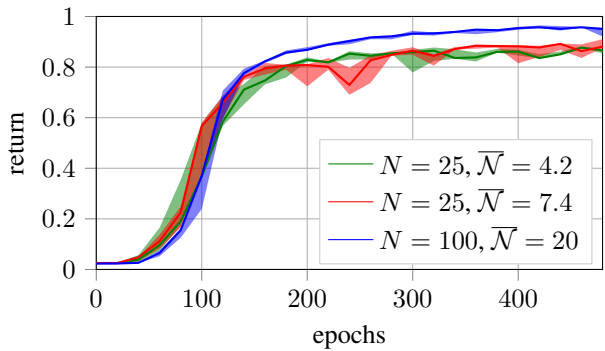}
\vspace*{-10pt}
\caption{Influence of network topology in single-task cart-pole balance with continuous action. Diff-DAC combines the experience of all agents, relatively insensitive to network sparsity.}
\label{fig:connectivity}
\end{figure}

%
\section{Conclusions}
\label{sec:conclusions}
%

We considered MRL where tasks are parametrized MDPs with parameters drawn from some distribution, 
and we derived an algorithm that learns a policy that performs well on average for the observed set of tasks.
We defined average global variables that allowed us to use standard optimal control theory and reformulate our MRL problem as an LP.
From this LP, we derived an exact, model-based actor-critic algorithm as an instance of dual ascent for finding the saddle point of the Lagrangian.
This saddle-point derivation is interesting in itself and provides novel insights in the actor-critic framework.
By approximating the exact method with deep neural networks, we obtained the Diff-DAC algorithm, which can scale to large number of tasks.

Simulation results showed that Diff-DAC can be faster and achieve higher asymptotic performance than the state of the art distributed algorithm for solving the MRL problem (i.e., Dist-MTLPS).
This is a remarkable result since the Diff-DAC agents converge to a single common policy that behaves better than the task-dependent linear policies obtained by Dist-MTLPS.
Moreover, Diff-DAC can solve complex problems that are uncontrollable from raw data by linear policies, 
while Dist-MTLPS requires (usually costly) feature engineering.
Diff-DAC is also very stable and achieves similar or usually higher asymptotic performance than the centraliized approach in both single and multitask problems.
This suggests that the sparse connectivity among agents induces a regularization effect that helps them to achieve better local optimum.
We consider this form of regularization an interesting line of rsearch.

Diff-DAC can be also extended to \textit{zero-shot learning} by taking the task parameter $\theta_k$ as an additional input to each agent's value/policy networks,
so that when a new task appears, it can input its parameter in the neural networks and adapt its behvarior to this task without further training.

Finally, it would be interesting to apply the proposed framework to derive distributed variants of other algorithms like PPO \cite{schulman2017proximal}.

\section{Acknowledgements}
%
We thank Haitham Bou-Ammar and Peter Vrancx for insightful discussions.

\bibliographystyle{aaai}
\bibliography{myreferences,myarticles}

\end{document}